\documentclass{article} % For LaTeX2e
\usepackage{iclr2027_conference,times}

\usepackage{amsmath,amsfonts,bm}

\def\eqref#1{equation~\ref{#1}}
\def\1{\bm{1}}

\DeclareMathAlphabet{\mathsfit}{\encodingdefault}{\sfdefault}{m}{sl}
\SetMathAlphabet{\mathsfit}{bold}{\encodingdefault}{\sfdefault}{bx}{n}

\usepackage{microtype}
\usepackage{graphicx}
\usepackage{booktabs}
\usepackage{amsmath,amssymb,bm,amsthm}
\usepackage{enumitem}
\usepackage{xcolor}
\usepackage{url}
\usepackage[colorlinks=true,allcolors=blue]{hyperref}

\newcommand{\C}{\mathcal{C}}
\newcommand{\Sd}{\mathrm{d}}
\newcommand{\Ss}{\mathrm{s}}
\newcommand{\TopK}{\operatorname{TopK}}
\newcommand{\HM}{\mathrm{HM}}

\newtheorem{proposition}{Proposition}

\newcommand{\safeincludegraphics}[2][]{%
  \includegraphics[#1]{#2}%
}

\title{Beyond Argmax: A Mechanistic Study of Semantic Retention in\\
Frozen Foundation-Model Composition for Generalized Few-Shot 3D Segmentation}

\author{Silas Kwabla Gah \\
Department of Computer Science\\
University of Ghana, Legon\\
\texttt{skgah001@st.ug.edu.gh}
\And
Ebenezer Owusu \\
Department of Computer Science\\
University of Ghana, Legon\\
\texttt{ebeowusu@ug.edu.gh}
}

\iclrfinalcopy % Non-anonymous preprint mode for arXiv
\begin{document}

\maketitle

\begin{abstract}
Classical classifier-combination work has long distinguished score-level fusion from hard decision-level voting. We revisit this distinction in a modern regime where independently pretrained, frozen foundation models are composed at inference time for generalized few-shot 3D segmentation. Rather than presenting probability averaging as a new fusion rule, we ask a narrower mechanistic question: \emph{how much useful semantic information is lost when heterogeneous sources are collapsed to a single class before they can interact?}

We answer this question with a same-input semantic-retention intervention. Dense RegionPLC evidence, sparse cross-view SAM3 evidence, model weights, masks, geometry, class vocabularies, scene lists, and fusion rules are frozen; only the number of semantic alternatives retained before interaction is varied through a matched top-$k$ ladder. On the held-out 156-scene ScanNet200 partition, the frozen top-1 control reaches 28.47 harmonic-mean (HM) IoU, while full distribution fusion reaches 34.87 HM, a gain of 6.40 points with a 95\% paired scene-bootstrap confidence interval of $[+5.24,+7.64]$. The largest single gain occurs immediately from top-1 to top-2. The pattern independently replicates on all 50 ScanNet++ validation scenes: 23.02 HM at top-1 versus 26.50 HM under full retention ($+3.48$, 95\% CI $[+1.64,+5.93]$).

The conclusion is robust to implementation choices that could otherwise confound the comparison. Full-distribution HM remains between 34.17 and 34.87 as the sparse-source weight varies from 0.3 to 0.7; changing the exact tie policy for equal-weight one-hot disagreement does not reduce the full-retention advantage; and alternative distribution-level operators (max and geometric/log pooling) also outperform a common hard Top-1 control. A controlled GroundingDINO--SAM2.1 source-replacement diagnostic further increases HM monotonically from 14.77 at top-1 to 18.75 with full sparse semantic retention. Calibration diagnostics reveal severe but opposite raw miscalibration of the dense and sparse sources, yet correcting confidence calibration does not eliminate the semantic-retention advantage.

Across datasets and source stacks, most useful information is recovered by retaining a compact set of plausible alternatives, while unrestricted low-probability tails can introduce noise. The resulting contribution is therefore not a new sum rule, but a controlled diagnosis of \emph{premature semantic collapse as a repeatable information bottleneck in heterogeneous frozen-model composition}.
\end{abstract}

\section{Introduction}

Modern perception systems increasingly compose independently pretrained foundation models rather than train a single end-to-end predictor. In 3D scene understanding, a practical system may combine a language-aligned point-cloud model, one or more promptable image models, camera geometry, and lightweight inference-time logic while leaving the constituent networks frozen. Such composition is attractive under limited supervision because it reuses broad pretrained knowledge without requiring additional task-specific optimization. It also raises a basic systems question: \emph{at what point should semantic uncertainty be collapsed into a discrete class decision?}

This question is not new in the abstract. Classical classifier-combination literature distinguishes decision-level voting from score- or probability-level combination, and the sum rule is a well-established baseline for combining classifier outputs~\cite{kittler1998combining}. Our goal is therefore \textbf{not} to claim that averaging class distributions is itself novel. Instead, we study the practical consequences of \emph{early semantic collapse} in a different regime: heterogeneous, independently pretrained, frozen foundation-model sources with partial spatial overlap, different confidence behavior, cross-view lifting, and generalized few-shot base--novel evaluation.

The central distinction is an ordering choice. A common hard pipeline first reduces each source to its most likely class and then decides which source to trust:

\begin{equation}
\boxed{
\text{source distributions}
\rightarrow
\arg\max
\rightarrow
\text{source selection or voting}
}.
\label{eq:hard-ordering}
\end{equation}

The alternative keeps semantic alternatives available until after sources have interacted:

\begin{equation}
\boxed{
\text{source distributions}
\rightarrow
\text{cross-source interaction}
\rightarrow
\arg\max
}.
\label{eq:soft-ordering}
\end{equation}

The final prediction still uses argmax. What changes is \emph{when} the irreversible many-to-one collapse occurs.

This distinction matters in frozen foundation-model composition because the sources are heterogeneous. A dense 3D vision--language model can assign substantial mass to several semantically related classes. A sparse 2D concept segmenter, lifted through posed views, can provide complementary object-level evidence only for a subset of points. If both are independently collapsed to top-1 before interaction, secondary support that might have agreed across sources is permanently unavailable to any later selector, calibrator, or reliability model.

We study this mechanism using RegionPLC~\cite{yang2024regionplc} as the frozen dense source and a frozen SAM3 concept-mask stack as the primary sparse source. The study is deliberately same-input. Dense probabilities, sparse masks, geometry, scene identities, class vocabulary, and fusion weights remain fixed. We then progressively restrict the semantic information available before fusion using top-$k$ truncation. This creates a direct intervention on information retention rather than a comparison between unrelated architectures.

The resulting top-$k$ curves reveal a consistent pattern. On both ScanNet200~\cite{rozenberszki2022scannet200} and ScanNet++~\cite{yeshwanth2023scannetpp}, the largest improvement appears immediately after relaxing top-1 collapse. Additional alternatives continue to help before the curve saturates. On ScanNet200, top-20 slightly exceeds full retention; on ScanNet++, top-10/top-20 slightly exceed the unrestricted tail. The evidence therefore supports a narrower principle than ``full probabilities are always better'': \emph{preserve a small set of plausible alternatives long enough for heterogeneous sources to interact}.

We further test whether this effect depends on the primary sparse source. In a controlled source-replacement diagnostic, GroundingDINO~\cite{liu2024groundingdino} plus SAM2.1~\cite{ravi2024sam2} replaces the SAM3 stack. The same spatial detections and masks are reused for every retention condition, and the hard representation is verified to be the exact argmax collapse of the soft representation. The same qualitative retention effect remains.

\paragraph{Contributions.}
Our contribution is an empirical and methodological study of information retention, not a new classifier-combination rule.

\begin{enumerate}[leftmargin=*,itemsep=3pt,topsep=3pt]
    \item \textbf{A mechanistic formulation of premature semantic collapse in frozen-model composition.}
    We connect the modern 3D setting explicitly to classical score- versus decision-level fusion and isolate the ordering of semantic collapse as the object of study.

    \item \textbf{A matched semantic-information intervention.}
    We introduce a same-input top-$k$ retention ladder that fixes upstream sources, masks, geometry, scene sets, class vocabularies, and the fusion operator while varying only how much semantic support survives before interaction.

    \item \textbf{Large-scale cross-dataset evidence.}
    Full retention improves HM over matched top-1 by 6.40 points on 156 held-out ScanNet200 scenes and by 3.48 points on all 50 ScanNet++ validation scenes, with paired scene-bootstrap confidence intervals excluding zero.

    \item \textbf{Source-independence and failure analysis.}
    A controlled GroundingDINO--SAM2.1 source-replacement diagnostic reproduces the retention effect, while qualitative and scene-level analyses show both where secondary hypotheses correct errors and where tail mass can harm strong dense predictions.
\end{enumerate}

\section{Related Work}

\subsection{Decision-Level and Score-Level Classifier Combination}
The distinction between combining hard decisions and combining continuous classifier scores is classical. Kittler et al.~\cite{kittler1998combining} developed a common framework for classifier combination and compared rules including product, sum, max, min, median, and majority voting. This literature establishes that score-level combination can preserve information that is unavailable after hard decisions are taken. Our paper does not claim otherwise. The open question we investigate is whether the same information restriction is a material failure mode in modern frozen foundation-model composition, where sources are heterogeneous, only partially overlapping in coverage, and connected through geometric lifting rather than trained as a conventional ensemble.

The same broad principle has recently reappeared outside vision. DeePEn~\cite{huang2024deepen}, for example, performs training-free distribution fusion for heterogeneous large language models after mapping model-specific probability spaces into a shared relative space. Such work reinforces that internal distributions can be more informative than final discrete outputs. Our setting differs in task, representation, spatial structure, and partial source availability; our focus is a controlled intervention that measures how much semantic support must survive before 3D cross-source interaction.

\subsection{Generalized Few-Shot 3D Segmentation}
Generalized few-shot 3D point-cloud segmentation requires adaptation to novel classes while retaining base-class performance. GFS-VL~\cite{an2025gfsvl} combines few-shot samples with pseudo-label knowledge from 3D vision--language models through pseudo-label selection, adaptive infilling, and novel--base mixing. HOP3D~\cite{zhao2026hop3d} further addresses base--novel interference with hierarchical orthogonal prototypes and entropy-based regularization. These trained adaptation methods provide important contextual performance references, but they are not matched causal baselines for our study because they update task-specific representations using few-shot supervision. Our primary comparisons instead hold the frozen upstream evidence fixed and intervene only on semantic retention.

Recent few-shot work also increasingly emphasizes the importance of where and how heterogeneous experts interact. DA-FSS, for example, studies decoupled multimodal experts and arbitration in few-shot 3D segmentation. Our setting is complementary: rather than train an arbitration architecture, we ask what information must remain available to any arbitration rule in the first place.

\subsection{Frozen and Training-Free 2D-to-3D Composition}
Open-vocabulary and training-free 3D pipelines commonly transfer semantic information from 2D foundation models into 3D. RegionPLC~\cite{yang2024regionplc} learns language-aligned regional point representations and serves as our frozen dense source. OV-SAM3D~\cite{tai2024ovsam3d} uses SAM-derived masks and open-vocabulary recognition for training-free 3D understanding. MaskClustering~\cite{yan2024maskclustering} aggregates multi-view 2D masks through a global view-consensus graph for training-free open-vocabulary 3D instance segmentation. These works demonstrate the value of heterogeneous 2D/3D evidence, but their primary objective is producing stronger 3D predictions or instances. We begin after frozen semantic evidence has been produced and isolate the information state at which cross-source combination occurs.

Cross-modal distribution alignment is also central in domain-adaptive 3D segmentation. FtD++~\cite{wu2024ftdpp}, for example, uses cross-modal fusion and positive distillation to preserve complementary 2D/3D information during adaptation. Our method differs by requiring no target-domain training or distillation; nevertheless, the broader motivation that heterogeneous modalities contain complementary information is closely related.

\subsection{Calibration and Reliability Weighting}
Independently pretrained sources need not share a calibrated confidence scale. Calibration, global reliability weighting, and local/contextual weighting are therefore natural components of multimodel composition. We evaluate these variants on ScanNet200, but use them diagnostically rather than as the principal contribution. Their role is to test whether additional weighting complexity can compensate for information destroyed by early semantic collapse. As the results show, it generally cannot.

\section{Problem Formulation}

Let a 3D scene contain points $u\in\mathcal{U}$ and let $\C$ denote the runtime semantic vocabulary. We consider two frozen sources $\mathcal{S}=\{\Sd,\Ss\}$, where $\Sd$ is dense 3D semantic evidence and $\Ss$ is sparse cross-view evidence. When available, source $s$ produces a nonnegative semantic vector

\begin{equation}
\bm{\pi}_s(u)=
[\pi_{s,1}(u),\ldots,\pi_{s,|\C|}(u)],
\qquad
\pi_{s,c}(u)\ge 0,
\end{equation}

normalized to unit mass. Let $a_s(u)\in\{0,1\}$ denote source availability.

A distribution-level fusion rule forms

\begin{equation}
\bm{\pi}_{\mathrm{fuse}}(u)=
\frac{
\sum_{s\in\mathcal{S}}
a_s(u)\beta_s(u)\bm{\pi}_s(u)
}{
\sum_{s\in\mathcal{S}}
a_s(u)\beta_s(u)
},
\qquad
\beta_s(u)\ge 0,
\label{eq:fusion}
\end{equation}

and predicts

\begin{equation}
\widehat{y}(u)=
\arg\max_{c\in\C}
\pi_{\mathrm{fuse},c}(u).
\end{equation}

If sparse evidence is unavailable, the dense distribution is retained unchanged.

\subsection{Argmax as an Information Restriction}
The map $\bm{\pi}\mapsto\arg\max_c\pi_c$ is many-to-one: infinitely many distinct semantic distributions collapse to the same class label. The following elementary proposition formalizes why a downstream rule that receives only hard labels cannot, in general, reconstruct a score-level combination. We emphasize that this is a motivation for the intervention, not a new classifier-fusion theorem.

\begin{proposition}
No rule that observes only the per-source argmax labels can reproduce equal probability fusion for all possible source distributions.
\end{proposition}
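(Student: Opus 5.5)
The plan is a standard indistinguishability (fooling-pair) argument. Any rule $R$ that sees only the per-source argmax labels is a function of the tuple $(\arg\max_c \pi_{\Sd,c}(u), \arg\max_c \pi_{\Ss,c}(u))$. I will exhibit two pairs of source distributions, $(\bm{\pi}_{\Sd},\bm{\pi}_{\Ss})$ and $(\bm{\pi}'_{\Sd},\bm{\pi}'_{\Ss})$, that have the same per-source argmax labels but whose equal-weight fusion (\eqref{eq:fusion} with $a_{\Sd}=a_{\Ss}=1$ and $\beta_{\Sd}=\beta_{\Ss}$) has a different argmax. Since $R$ receives identical inputs in the two cases, it must output the same label, so it is wrong on at least one of them. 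I will read ``reproduce equal probability fusion'' in its weakest sense, namely matching the fused prediction $\widehat{y}(u)$. Failing on the predicted label then implies failing on the full fused vector as well.

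The construction needs only $|\C|\ge 3$, and I will take classes $1,2,3$, with any remaining classes given zero mass. In configuration A, set $\bm{\pi}_{\Sd}=(0.5,0.4,0.1)$ and $\bm{\pi}_{\Ss}=(0.1,0.4,0.5)$. The argmax labels are then $(1,3)$, and the fused vector is $(0.3,0.4,0.3)$, so the fused argmax is class $2$. This is the paper's mechanism in miniature: secondary support that agrees across the two sources wins. In configuration B, set $\bm{\pi}'_{\Sd}=(0.9,0.05,0.05)$ and $\bm{\pi}'_{\Ss}=(0.05,0.05,0.9)$. The labels are again $(1,3)$, but the fused vector is $(0.475,0.05,0.475)$, and its argmax lies in $\{1,3\}$ and is never $2$. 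If I want to avoid any question about how ties are broken, I will perturb B to $(0.9,0.05,0.05)$ and $(0.1,0.05,0.85)$. Then the fused argmax is uniquely $1$, while the per-source labels stay $(1,3)$.

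The argument concludes as follows. $R(1,3)$ is a single class, and it cannot equal both $2$ (required by A) and $1$ (required by B). Hence no label-only rule matches equal fusion on all inputs. This holds even if $R$ is allowed to randomize, because on one of the two configurations it must err with probability at least $1/2$. The only real obstacle is a definitional one: fixing what ``reproduce'' means and what happens with ties, and ensuring $|\C|\ge 3$. I will state the tie-free perturbed version to sidestep the first two issues, and note that with $|\C|=2$ a different but analogous pair suffices. For example, the labels $(1,2)$ arise both from $(0.9,0.1),(0.4,0.6)$, whose fusion gives class $1$, and from $(0.6,0.4),(0.1,0.9)$, whose fusion gives class $2$. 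This shows the claim holds for any vocabulary with at least two classes.
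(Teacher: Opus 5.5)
Your proof is correct and uses the same indistinguishability argument as the paper: two source pairs share the same argmax tuple but have different equal-weight fused argmaxes, so any label-only rule must fail on one of them. The paper uses only a two-class pair (dense $[0.90,0.10]$ with sparse $[0.49,0.51]$, versus dense $[0.51,0.49]$ with sparse $[0.10,0.90]$), which is essentially your $|\C|=2$ fallback; your three-class example additionally shows that the fused winner can be a class neither source votes for, a useful illustration that the proof does not need.
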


\begin{proof}
Consider two two-class source pairs:
\begin{align}
\bm{\pi}^{(A)}_d&=[0.90,0.10], &
\bm{\pi}^{(A)}_s&=[0.49,0.51],\\
\bm{\pi}^{(B)}_d&=[0.51,0.49], &
\bm{\pi}^{(B)}_s&=[0.10,0.90].
\end{align}
Both pairs expose the same hard tuple: the dense source predicts class 1 and the sparse source predicts class 2. Equal score averaging predicts class 1 for pair A and class 2 for pair B. Any rule receiving only the hard tuple $(1,2)$ must return the same decision for both and therefore cannot reproduce equal probability fusion in general.
\end{proof}

The proposition says nothing about accuracy. It only establishes irrecoverable information loss. Whether the discarded alternatives are useful is the empirical question studied below.

\subsection{Top-$k$ Semantic Retention}
Let $\mathcal{T}_k(\bm{\pi})$ retain the $k$ largest entries of $\bm{\pi}$, set all other entries to zero, and renormalize:

\begin{equation}
\mathcal{T}_k(\bm{\pi})_c
=
\frac{
\pi_c\,\mathbb{I}[c\in\TopK(\bm{\pi},k)]
}{
\sum_j \pi_j\,\mathbb{I}[j\in\TopK(\bm{\pi},k)]
}.
\label{eq:topk}
\end{equation}

The case $k=1$ is an argmax-equivalent one-hot restriction. The case $k=|\C|$ retains the complete semantic distribution. Intermediate values directly control how much secondary semantic support survives before fusion.

\section{Study Design: Controlled Semantic Retention}

\subsection{Frozen Dense Source: RegionPLC}
For each valid 3D point, RegionPLC~\cite{yang2024regionplc} produces class logits over the evaluation vocabulary. We apply softmax before any argmax operation and cache the resulting dense probability matrix. For ScanNet200 the source space contains 200 semantic classes; the generalized few-shot metric is computed on the protocol's 12 base and 45 novel classes. For ScanNet++ the matched evaluation space contains 30 classes (12 base and 18 novel).

Dense probabilities are stored as float16 arrays to control memory. At evaluation they are converted to float32 in streaming chunks. Cache validation checks row probability mass and verifies that the cached distribution argmax reproduces the frozen RegionPLC hard prediction up to expected float16 storage tolerance.

\subsection{Primary Sparse Source: SAM3 Cross-View Concept Evidence}
The primary sparse source is frozen SAM3 promptable concept segmentation~\cite{carion2026sam3}. For ScanNet200, each canonical class name from the exact GFS-VL 200-class vocabulary is prompted independently and verbatim: no synonyms, templates, composite prompts, or learned prompt parameters are used. SAM3 is frozen throughout. We use a mask-confidence threshold of 0.3 and uniformly subsample every 20th captured frame. Multiple returned instances of the same concept are unioned into one per-concept binary mask for that frame; absent concepts are not stored. The mask score is retained as metadata, but the Stage-2 cross-view lift uses the binary concept masks and concept IDs rather than the score.

The 3D lift follows the frozen ScanNet projection used by the upstream RCASA pipeline. For a world-space point $X_u$, camera coordinates for view $v$ are
\begin{equation}
\widetilde{X}_{uv}
=
T_{v}^{-1}
\begin{bmatrix}
X_u\\1
\end{bmatrix},
\end{equation}
where $T_v$ is the camera pose. Positive-depth points are projected using the color-camera intrinsics. The principal experiment uses in-frustum visibility only; no depth-based z-buffer is introduced into the matched retention study.

For concept $c$, let $m_{v,c}(u)\in\{0,1\}$ indicate whether the projected point falls inside the union mask of concept $c$ in view $v$, and let $\nu_v(u)\in\{0,1\}$ indicate a valid in-frustum projection. With uniform view weighting, the lifted cross-view support is
\begin{equation}
q_u(c)
=
\frac{\sum_v m_{v,c}(u)}{\max\!\left(1,\sum_v \nu_v(u)\right)}.
\label{eq:crossview}
\end{equation}
Only non-zero $q_u(c)$ values are stored, using sparse coordinate arrays $(u,c,q_u(c))$. At evaluation, the sparse semantic distribution is reconstructed and normalized across all supported concepts,
\begin{equation}
\pi_{s,c}(u)
=
\frac{q_u(c)}{\sum_j q_u(j)},
\end{equation}
whenever non-zero sparse evidence exists. Because masks are stored separately per concept rather than collapsed into a per-pixel label map, cross-view support for one concept is not altered by which other concepts are present in the vocabulary.

On ScanNet++ the sparse source supports the 18 novel evaluation categories. These entries are embedded into the common 30-class space, with zero sparse mass assigned to the 12 base classes. This structural asymmetry is reported explicitly in the base--novel analysis.

\subsection{Matched Fusion and the Retention Intervention}
The primary parameter-free combination is uniform averaging. Where sparse evidence exists,

\begin{equation}
\bm{\pi}_{\mathrm{avg}}(u)
=
\frac{1}{2}\bm{\pi}_d(u)
+
\frac{1}{2}\bm{\pi}_s(u),
\label{eq:avg}
\end{equation}

and dense prediction is retained elsewhere.

For the main ScanNet200 and ScanNet++ retention ladders, the \emph{same} top-$k$ operator is applied to both source distributions before Eq.~\eqref{eq:avg}:

\begin{equation}
\bm{\pi}^{(k)}_{\mathrm{fuse}}(u)
=
\frac{1}{2}\mathcal{T}_k(\bm{\pi}_d(u))
+
\frac{1}{2}\mathcal{T}_k(\bm{\pi}_s(u)).
\label{eq:matched-retention}
\end{equation}

Thus the only manipulated variable is the semantic support retained before interaction. The fusion weight, upstream evidence, and geometry are unchanged.

For $k=1$, equal-weight one-hot disagreement produces an exact tie between the dense and sparse winning classes. The frozen primary ladder uses the deterministic array-argmax behavior of the original evaluator, which returns the lowest-index maximizer. Because this convention is not semantically meaningful, we separately evaluate dense-favouring and sparse-favouring tie policies. The frozen convention is retained for the headline ladder to avoid changing a completed primary experiment after observing outcomes; the robustness study tests whether its tie behavior explains the result.

We additionally test whether the finding is specific to linear averaging. Using the same full source distributions, we evaluate the classical class-wise max rule and an equal-weight geometric/log pool,
\begin{equation}
\ell_c(u)=\tfrac12\log\max(\pi_{d,c}(u),\epsilon)
          +\tfrac12\log\max(\pi_{s,c}(u),\epsilon),
\label{eq:poe}
\end{equation}
with dense fallback when sparse evidence is absent. These operators are diagnostic robustness checks rather than post-hoc replacements for the frozen primary linear rule.

\subsection{Reliability-Weighted Variants}
On ScanNet200 we additionally evaluate three weighting variants using the same cached distributions: source-specific calibrated probability fusion, global reliability scaling, and candidate-specific contextual reliability. The scene order is frozen once and split by parity into 156 fit and 156 evaluation scenes. Calibration and reliability quantities are estimated only from the fit half. The frozen hard-selector anchor uses histogram calibration, while global and contextual variants use source confidence/reliability estimates from the same fit partition. These variants are not proposed as the main method; they test whether additional weighting complexity can recover information already destroyed by early semantic collapse.

\subsection{Alternative Sparse Source: GroundingDINO--SAM2.1}
To test source dependence, we construct a separate diagnostic with GroundingDINO~\cite{liu2024groundingdino} and SAM2.1~\cite{ravi2024sam2}. The experiment uses five deterministically preselected ScanNet200 scenes and ten deterministic posed frames per scene (50 frames total). GroundingDINO class prompts are processed in three chunks covering the 200-class source vocabulary. We use proposal threshold 0.35 and merge near-identical proposals using box IoU $\geq0.90$. A SAM2.1 Hiera-L model produces one spatial mask per merged proposal cluster.

Each spatial cluster stores a 200-dimensional nonnegative semantic support vector from GroundingDINO. Because these scores are not probability-normalized, the retained support is normalized within each cluster before lifting. The hard control is constructed from the exact same clusters by replacing each semantic vector with a one-hot vector at its argmax. Across the 50 diagnostic frames, this construction contains 511 spatial clusters. We verify zero mismatches between soft argmax IDs and hard IDs, zero hard-concept mismatches, and zero hard-mask union mismatches.

For this source-independence diagnostic only, the dense RegionPLC distribution remains full in every condition and top-$k$ retention is applied to the GroundingDINO semantic vector. This isolates semantic collapse in the replacement sparse stack. If multiple clusters cover a point in one frame, per-class support uses the maximum within that frame before cross-view accumulation. All conditions reuse the same masks, frames, camera projection, dense probabilities, and 0.5/0.5 fusion rule.

\section{Experimental Protocol}

\subsection{ScanNet200}
ScanNet200~\cite{rozenberszki2022scannet200} is the principal experiment. The complete 312-scene validation protocol is processed in a 200-class source space. Scenes are ordered once and split deterministically by parity:

\begin{equation}
0,2,4,\ldots\rightarrow\text{fit},
\qquad
1,3,5,\ldots\rightarrow\text{evaluation}.
\end{equation}

This yields 156 fit and 156 evaluation scenes. Calibration and learned reliability components use labels only from the fit partition. Every reported ScanNet200 comparison uses the same untouched 156 evaluation scenes. The parameter-free uniform-retention ladder contains no fitted quantities, but it is evaluated on the same held-out half for paired comparison with the frozen hard selector.

We report base mIoU, novel mIoU, and harmonic mean

\begin{equation}
\HM
=
\frac{
2\,\mathrm{mIoU}_{\mathrm{base}}
\,\mathrm{mIoU}_{\mathrm{novel}}
}{
\mathrm{mIoU}_{\mathrm{base}}
+
\mathrm{mIoU}_{\mathrm{novel}}
}.
\end{equation}

\subsection{ScanNet++ Cross-Dataset Replication}
We independently repeat the parameter-free retention experiment on all 50 ScanNet++ validation scenes~\cite{yeshwanth2023scannetpp}. The evaluation vocabulary contains 30 classes: 12 base and 18 novel. ScanNet++ changes the sensing and projection regime through aligned high-fidelity geometry, undistorted DSLR imagery, and corresponding posed camera transforms.

Because the matched retention ladder has no learned parameter or selected threshold, all 50 ScanNet++ scenes are evaluated directly without a fit split. Historical hard-selector results that require calibration are kept separate from this parameter-free replication.

\subsection{Matched Controls}
Within each primary dataset, we hold fixed source model weights, point clouds, class vocabularies, 2D masks, camera geometry, scene lists, dense outputs, sparse evidence, and the 0.5/0.5 fusion rule. The retention ladder is

\begin{equation}
k\in\{1,2,5,10,20,\mathrm{all}\}.
\end{equation}

Thus $k=1$ is the matched pre-collapse control and $k=\mathrm{all}$ is full semantic retention.

\subsection{Cache Integrity, Storage, and Streaming Evaluation}
The ScanNet200 probability cache contains 312/312 valid scene files and 49.54 million points. Dense probabilities are stored as float16, requiring exactly 400 bytes per point for a 200-class distribution before container overhead. Across all 312 scenes, the raw dense-probability arrays occupy 18.46~GiB, while sparse COO evidence contains 315.06 million non-zero entries (2.93~GiB at 10 bytes per entry). The compressed NPZ footprint of the complete cache is 13.31~GiB. The held-out 156 scenes contain 25.02 million points and occupy 6.69~GiB compressed.

Evaluation is streamed in chunks of 5,000 points. One $5000\times200$ float32 matrix occupies 3.81~MiB; three simultaneously materialized chunk matrices require approximately 11.44~MiB. Measured end-to-end process peak resident memory is 719.52~MiB, including scene loading and Python/NumPy overhead. On CPU, full probability fusion over the 25.02 million held-out points takes 12.00~s after loading, corresponding to 0.077~s per scene and approximately 2.08 million points/s. Scene loading from compressed NPZ files takes 130.58~s and therefore dominates total evaluation wall-clock time.

The independent ScanNet++ cache contains 50/50 valid scenes with zero integrity failures, dense probability mass in the range $0.999588$--$1.000416$, and dense argmax agreement between 99.968\% and 99.998\% after float16 storage. Its sparse cache contains 25.67 million non-zero class-evidence entries.

\subsection{Statistical Analysis}
Headline comparisons are paired at scene level. We report 95\% paired scene-bootstrap confidence intervals for HM differences relative to the matched top-1 control. On ScanNet200 we additionally report the paired difference between full distribution fusion and the frozen hard selector. The GroundingDINO--SAM2.1 source-replacement experiment contains only five scenes; its paired scene-bootstrap interval is therefore labeled \emph{diagnostic} and is not treated as a substitute for the large-scale ScanNet200/ScanNet++ evidence. Tie-policy and alternative-operator comparisons use 5,000 paired scene-bootstrap replicates on the same held-out 156 ScanNet200 scenes. The fusion-weight and PoE-$\epsilon$ sweeps are robustness diagnostics over fixed, predeclared grids and are not used to select the frozen headline method.

\section{Results}

\subsection{Main ScanNet200 Result: Information Retention Dominates Weighting Complexity}
Table~\ref{tab:scannet200-main} reports the frozen ScanNet200 evaluation. Dense RegionPLC reaches 26.53 HM. The frozen hard selector reaches 27.51 HM, and the matched top-1 one-hot fusion control reaches 28.47 HM. In contrast, uniform full-distribution averaging reaches 34.87 HM. Relative to the frozen hard selector, the gain is 7.37 HM with 95\% CI $[+6.29,+8.44]$.

More elaborate weighting does not outperform the simple score-level rule. Calibrated probability fusion reaches 34.51 HM and global reliability weighting reaches 34.38 HM, both close to uniform averaging. Candidate-specific contextual probability weighting falls to 27.80 HM. We interpret this as a negative but informative result: \emph{selector complexity cannot recover semantic alternatives once they have been discarded}. The representation available to arbitration matters more here than the sophistication of the weighting rule.

\begin{table}[t]
\centering
\caption{Frozen ScanNet200 evaluation on the 156-scene held-out partition. All rows use the same frozen upstream sources; only the arbitration representation or weighting rule changes.}
\label{tab:scannet200-main}
\small
\begin{tabular}{lccc}
\toprule
Method & Base & Novel & HM \\
\midrule
Dense RegionPLC & 43.60 & 19.06 & 26.53 \\
Frozen hard selector & 41.84 & 20.49 & 27.51 \\
Matched top-1 fusion & 45.13 & 20.80 & 28.47 \\
\midrule
Uniform full-distribution averaging & 44.43 & \textbf{28.70} & 34.87 \\
Calibrated probability fusion & 45.44 & 27.82 & 34.51 \\
Global reliability fusion & 45.70 & 27.56 & 34.38 \\
Contextual probability fusion & 44.06 & 20.31 & 27.80 \\
\midrule
Oracle source selector & \textbf{51.94} & 36.27 & \textbf{42.71} \\
\bottomrule
\end{tabular}
\end{table}

\subsection{ScanNet200 Retention Curve}
Table~\ref{tab:scannet200-retention} isolates the mechanism. Matched top-1 collapse reaches 28.47 HM. Retaining one additional class hypothesis raises HM to 31.83, a gain of 3.36 points with CI $[+2.46,+4.37]$. Performance continues to increase through top-20, which reaches 35.49 HM. Full retention reaches 34.87 HM, still 6.40 points above top-1 with CI $[+5.24,+7.64]$.

The curve is not strictly monotonic: top-20 exceeds the unrestricted distribution by 0.62 HM. This observation is central to the revised interpretation. The useful principle is not ``retain every tail entry,'' but ``do not collapse to a single class before heterogeneous evidence has interacted.'' A compact candidate set recovers most of the benefit while suppressing some low-probability noise.

\begin{table}[t]
\centering
\caption{Matched ScanNet200 semantic-retention intervention. $\Delta$HM and confidence intervals are relative to top-1.}
\label{tab:scannet200-retention}
\small
\begin{tabular}{lcccr}
\toprule
Retention & Base & Novel & HM & $\Delta$HM [95\% CI] \\
\midrule
Top-1 & 45.13 & 20.80 & 28.47 & 0.00 \\
Top-2 & 45.48 & 24.49 & 31.83 & +3.36 $[+2.46,+4.37]$ \\
Top-5 & \textbf{47.42} & 27.08 & 34.48 & +6.00 $[+4.98,+7.13]$ \\
Top-10 & 47.33 & 28.02 & 35.20 & +6.73 $[+5.62,+8.00]$ \\
Top-20 & 46.95 & 28.53 & \textbf{35.49} & +7.02 $[+5.89,+8.31]$ \\
Full & 44.43 & \textbf{28.70} & 34.87 & +6.40 $[+5.24,+7.64]$ \\
\bottomrule
\end{tabular}
\end{table}

\subsection{Cross-Dataset Replication on ScanNet++}
The same information-retention effect appears on ScanNet++. Table~\ref{tab:scannetpp-retention} reports all 50 validation scenes. Top-1 fusion reaches 23.02 HM. Top-2 immediately increases HM to 26.66, and top-5 through top-20 saturate near 26.9 HM. Full retention reaches 26.50 HM, improving over top-1 by 3.48 points with 95\% CI $[+1.64,+5.93]$.

The replication changes dataset, image capture, pose/projection regime, scene geometry, and class-space size. The same top-1-to-top-2 jump therefore cannot be attributed only to the long 200-class source space used for ScanNet200.

The base--novel split also exposes an important asymmetry. Full fusion raises novel mIoU from 17.94 to 25.83 but reduces base mIoU from 32.12 to 27.21 because the ScanNet++ sparse source carries evidence only for novel classes. We therefore make no claim of Pareto improvement on every metric component.

\begin{table}[t]
\centering
\caption{Matched ScanNet++ semantic-retention intervention over all 50 validation scenes. $\Delta$HM and confidence intervals are relative to top-1.}
\label{tab:scannetpp-retention}
\small
\begin{tabular}{lcccr}
\toprule
Retention & Base & Novel & HM & $\Delta$HM [95\% CI] \\
\midrule
Dense only & 32.12 & 16.16 & 21.51 & $-1.51$ $[-2.82,-0.45]$ \\
Top-1 & \textbf{32.12} & 17.94 & 23.02 & 0.00 \\
Top-2 & 27.52 & 25.85 & 26.66 & +3.64 $[+1.83,+6.10]$ \\
Top-5 & 27.37 & 26.39 & 26.87 & +3.85 $[+1.98,+6.36]$ \\
Top-10 & 27.31 & 26.54 & \textbf{26.92} & +3.90 $[+2.03,+6.45]$ \\
Top-20 & 27.27 & \textbf{26.57} & \textbf{26.92} & +3.89 $[+2.03,+6.46]$ \\
Full & 27.21 & 25.83 & 26.50 & +3.48 $[+1.64,+5.93]$ \\
\bottomrule
\end{tabular}
\end{table}

\begin{figure}[t]
\centering
\safeincludegraphics[width=0.94\linewidth]{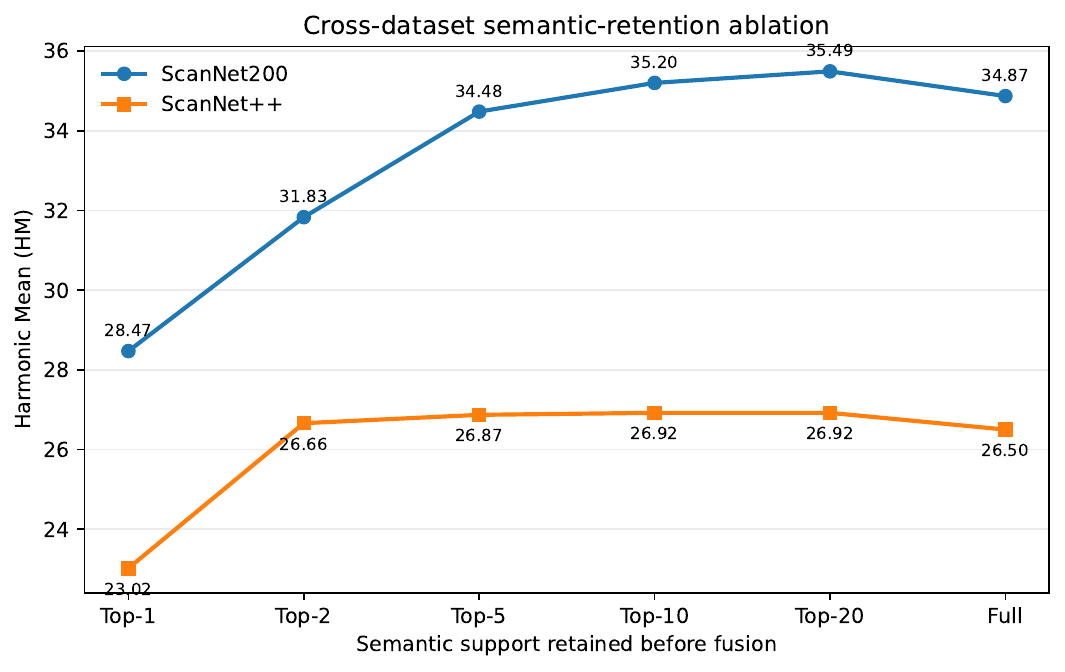}
\caption{Cross-dataset semantic-retention pattern. Both datasets show a large improvement immediately after preserving more than one semantic hypothesis, followed by saturation. Moderate top-$k$ truncation slightly exceeds the unrestricted tail.}
\label{fig:cross-dataset-retention}
\end{figure}

\subsection{Source-Independent Sparse-Stack Diagnostic}
Table~\ref{tab:gsam2-source} evaluates whether the semantic-retention effect is specific to the primary SAM3 sparse source. The experiment replaces that sparse stack with GroundingDINO--SAM2.1 while keeping RegionPLC dense probabilities fixed. The same 511 spatial clusters and masks are used for every condition; only the number of GroundingDINO semantic alternatives retained per cluster changes.

The retention curve is monotonic in this diagnostic: HM increases from 14.77 at top-1 to 15.61, 16.52, 17.14, 18.08, and finally 18.75 with full retention. The full-versus-top-1 difference is $+3.98$ HM with a diagnostic five-scene bootstrap interval $[+1.11,+6.54]$. Both base and novel performance improve from top-1 to full retention (base $+6.42$, novel $+2.86$).

Absolute performance remains below the dense-only RegionPLC anchor of 20.70 HM. We therefore do \emph{not} interpret this experiment as evidence that GroundingDINO--SAM2.1 is a stronger sparse predictor. Its purpose is narrower: after changing the sparse foundation-model stack, preserving its secondary semantic hypotheses remains substantially better than collapsing that same evidence to top-1.

\begin{table}[t]
\centering
\caption{Controlled five-scene sparse-source replacement using GroundingDINO--SAM2.1. RegionPLC remains full-distribution in every row; only sparse semantic retention changes. The confidence interval is diagnostic because $n=5$ scenes.}
\label{tab:gsam2-source}
\small
\begin{tabular}{lcccr}
\toprule
Retention & Base & Novel & HM & $\Delta$HM vs. Top-1 \\
\midrule
Dense only & 32.49 & 15.19 & 20.70 & -- \\
Top-1 & 21.14 & 11.35 & 14.77 & 0.00 \\
Top-2 & 22.62 & 11.91 & 15.61 & +0.84 \\
Top-5 & 24.20 & 12.54 & 16.52 & +1.75 \\
Top-10 & 25.39 & 12.94 & 17.14 & +2.37 \\
Top-20 & 26.22 & 13.80 & 18.08 & +3.31 \\
Full & \textbf{27.56} & \textbf{14.21} & \textbf{18.75} & +3.98 $[+1.11,+6.54]$ \\
\bottomrule
\end{tabular}
\end{table}

\subsection{Distribution-Level Operator Robustness}
The central claim should not depend on arithmetic averaging alone. Table~\ref{tab:operator-robustness} therefore combines the same full dense and sparse distributions with three classical operators: linear/sum pooling, class-wise max pooling, and equal-weight geometric/log pooling. To avoid the arbitrary class-index tie rule of the frozen top-1 ladder, all three are compared against a common dense-favouring one-hot control. Linear pooling reproduces the frozen full-distribution implementation exactly on all 156 scenes.

Every distribution-level operator exceeds the hard control. Linear pooling reaches 34.87 HM, max pooling reaches 30.41 HM, and geometric pooling reaches 36.16 HM, compared with 26.53 HM for the dense-favouring Top-1 reference. The geometric operator is the strongest exploratory result, but we do not promote it to the headline method because it was evaluated after the primary experiment was frozen. Its role is to show that the semantic-retention result is not unique to the sum rule. Appendix~\ref{app:poe-eps} further shows that geometric pooling is numerically stable to the zero-probability floor over eight orders of magnitude.

\begin{table}[t]
\centering
\caption{Alternative full-distribution fusion operators on the held-out 156 ScanNet200 scenes. All use identical frozen evidence. The common Top-1 reference resolves dense--sparse one-hot disagreement in favour of the dense source.}
\label{tab:operator-robustness}
\small
\begin{tabular}{lcccr}
\toprule
Operator & Base & Novel & HM & $\Delta$HM [95\% CI] \\
\midrule
Top-1 / dense tie & 43.60 & 19.06 & 26.53 & 0.00 \\
Linear / sum & 44.43 & 28.70 & 34.87 & +8.35 $[+7.15,+9.57]$ \\
Max rule & 37.59 & 25.53 & 30.41 & +3.88 $[+2.44,+5.35]$ \\
Geometric / PoE & \textbf{47.07} & \textbf{29.35} & \textbf{36.16} & +9.63 $[+8.69,+10.66]$ \\
\bottomrule
\end{tabular}
\end{table}

\subsection{Calibration, Tie, Weight, and Efficiency Diagnostics}
The supplementary diagnostics close four implementation concerns without changing the primary method. First, equal-weight one-hot disagreement is frequent: 13.86 million held-out valid points are exact dense--sparse Top-1 conflicts, corresponding to 67.93\% of sparse-covered valid points. Nevertheless, the full-retention advantage is $+6.40$ HM under the frozen lowest-index tie convention, $+8.35$ HM under dense-favouring ties, and $+14.86$ HM under sparse-favouring ties (Appendix~\ref{app:ties}). The frozen convention is therefore the \emph{strongest} tested Top-1 baseline and yields the smallest full-retention gap.

Second, full-distribution performance is insensitive to moderate source reweighting: for sparse weights $w_s\in\{0.3,0.4,0.5,0.6,0.7\}$, HM remains in the narrow 34.17--34.87 range. In contrast, Top-1 semantics turn continuous weighting into an almost binary switch: dense wins every disagreement below 0.5 and sparse wins above 0.5 (Appendix~\ref{app:weights}).

Third, the two sources are severely but oppositely miscalibrated before histogram calibration. Dense RegionPLC is under-confident (61.80\% empirical accuracy vs. 40.88\% mean confidence; ECE 29.20\%), while sparse SAM3 is over-confident (38.02\% accuracy vs. 81.31\% confidence; ECE 43.29\%). Fit-partition histogram calibration reduces ECE to 1.54\% and 0.69\%, respectively, and substantially lowers binary correctness Brier scores. Yet calibrated probability fusion (34.51 HM) remains slightly below uniform full-distribution averaging (34.87 HM), indicating that confidence calibration and semantic retention address distinct failure modes (Appendix~\ref{app:calibration}).

Finally, the measured CPU fusion cost is small relative to cache I/O: 12.00~s for 25.02 million held-out points at 2.08 million points/s, with 719.52~MiB process peak RSS (Appendix~\ref{app:efficiency}).

\subsection{Contextual Positioning Against Trained GFS-PCS Systems}
The retention study is designed as a causal same-input experiment, so trained few-shot systems are not methodologically matched baselines. Nevertheless, readers need an absolute reference point. Table~\ref{tab:contextual-trained} therefore reports published 1-shot HM for GFS-VL and HOP3D alongside our frozen inference-time result. The trained systems use few-shot supervision to adapt representations, whereas our row performs no task-specific parameter update and is not intended as a replacement for such training.

\begin{table}[t]
\centering
\caption{Contextual, \emph{not causal}, comparison with published 1-shot GFS-PCS results. Values are reported by the respective papers and are provided only to locate the scale of the frozen result; the training/adaptation protocols are not matched to our intervention.}
\label{tab:contextual-trained}
\small
\resizebox{\linewidth}{!}{%
\begin{tabular}{lccc}
\toprule
System & Adaptation & ScanNet200 HM & ScanNet++ HM \\
\midrule
GFS-VL~\cite{an2025gfsvl} & trained 1-shot & 40.92 & 29.47 \\
HOP3D~\cite{zhao2026hop3d} & trained 1-shot & \textbf{43.42} & 29.32 \\
\midrule
Frozen full retention (ours) & no parameter update & 34.87 & 26.50 \\
\bottomrule
\end{tabular}%
}
\end{table}

The table clarifies the scope of the contribution. Our frozen system does not match the absolute performance of trained 1-shot adaptation on ScanNet200, although the ScanNet++ gap is smaller. The contribution is instead the controlled diagnosis of an information bottleneck within frozen composition.

\subsection{What the Three Retention Curves Show}
The most consistent empirical pattern is not simply ``sum rule beats vote rule.'' The retention curves reveal \emph{where} information becomes useful. On ScanNet200, the first relaxation from top-1 to top-2 yields $+3.36$ HM. On ScanNet++, the same transition yields $+3.64$ HM. In the GroundingDINO--SAM2.1 diagnostic, every successive retention level improves HM.

This supports a more specific mechanism: semantic information that is ranked just below top-1 frequently contains complementary evidence needed by another source. Once a compact set of plausible alternatives survives, gains saturate; very low-probability tails can become noise. The relevant design axis is therefore not ``hard versus full'' alone, but the \emph{amount of semantic uncertainty preserved until interaction}.

\section{Mechanistic and Qualitative Analysis}

\subsection{Evaluation-Scene Selection Protocol}
Qualitative examples are selected only from the untouched ScanNet200 evaluation partition and only after the quantitative experiment is frozen. Positive cases satisfy three deterministic conditions before visualization: HM improves under full probability fusion, novel mIoU improves, and corrected points outnumber newly harmed points. This prevents manual image inspection from determining which success cases are shown.

The selected positive scenes are \texttt{scene0356\_01}, \texttt{scene0574\_02},\\
and \texttt{scene0164\_01}.\par
Their HM improvements are $+15.49$, $+13.60$, and $+12.48$, respectively. Novel-mIoU improves by $+14.70$, $+14.81$, and $+16.40$, with net point-level correction counts of $+2434$, $+1717$, and $+2953$.

\begin{figure}[t]
\centering
\safeincludegraphics[width=\linewidth]{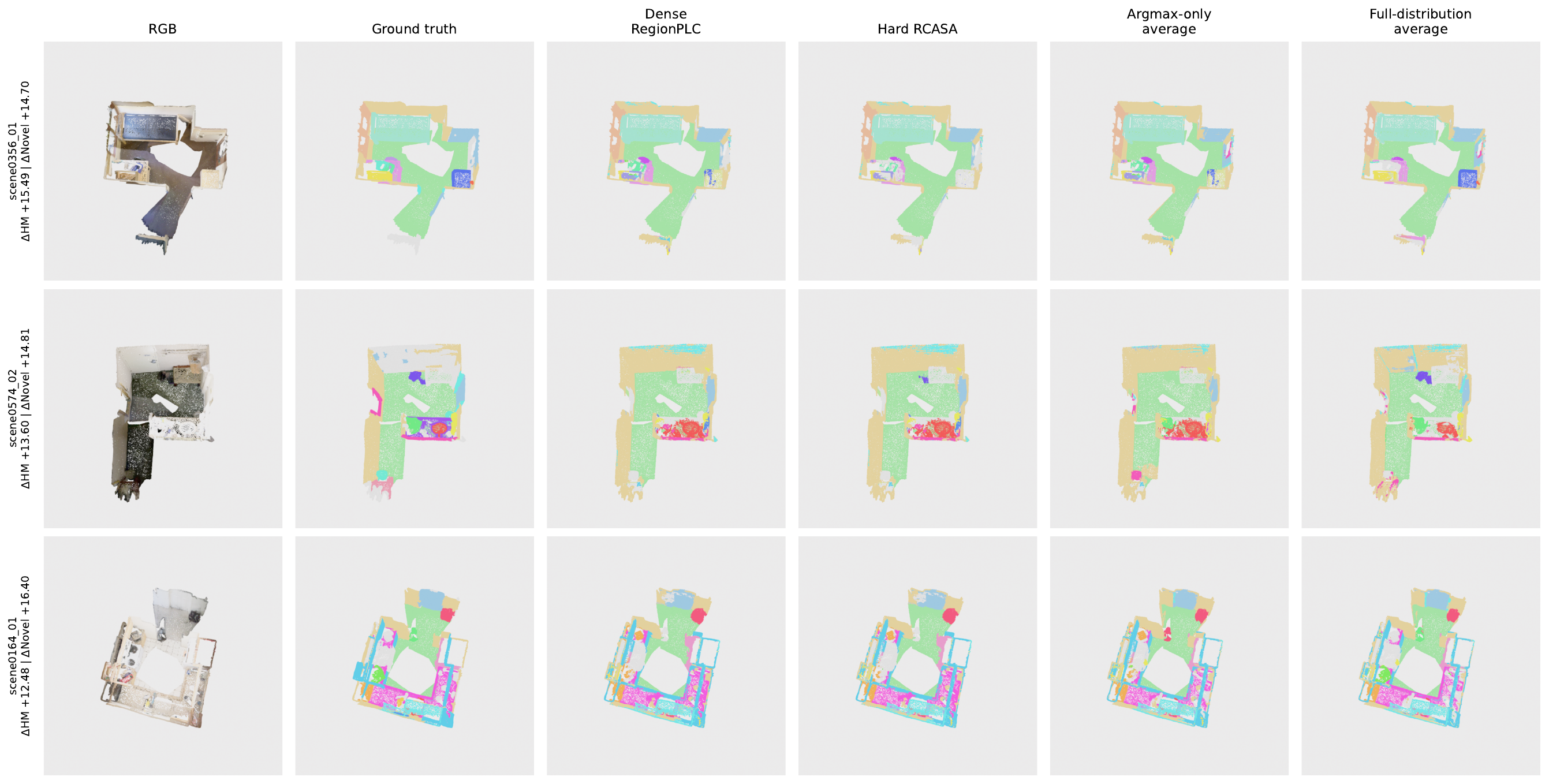}
\caption{Representative ScanNet200 evaluation scenes in which distribution-level interaction improves both HM and novel recognition. Cases are selected before visualization using $\Delta\HM>0$, $\Delta\mathrm{Novel}>0$, and positive net point-level corrections.}
\label{fig:qual-positive}
\end{figure}

\subsection{Where Corrections Occur}
Figure~\ref{fig:qual-mechanism} compares matched top-1 and full-retention predictions at point level. Green points are corrections introduced after preserving semantic alternatives, red points are newly harmed, light grey denotes points correct under both rules, and dark grey denotes points wrong under both. Improvements are spatially localized rather than a uniform relabeling of the scene.

\begin{figure}[t]
\centering
\safeincludegraphics[width=\linewidth]{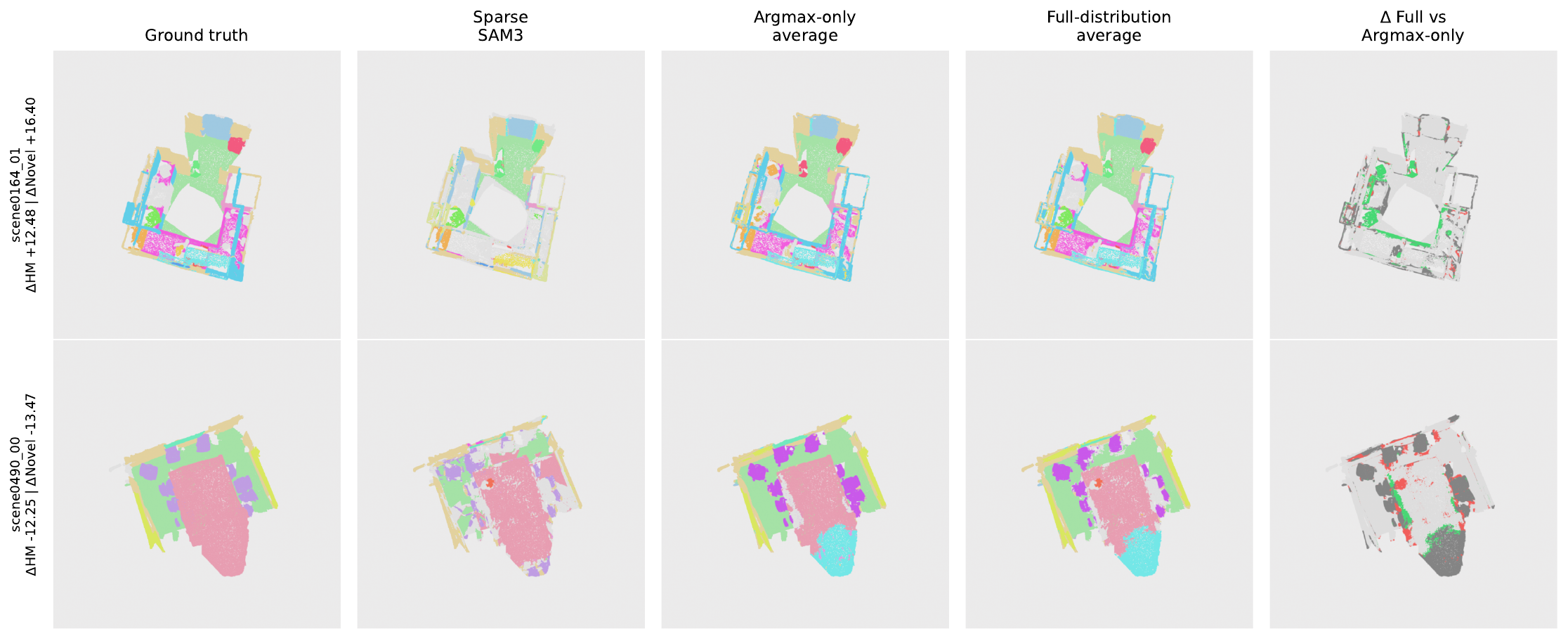}
\caption{Point-level mechanism visualization for the matched retention comparison. Green marks beneficial changes relative to top-1 collapse, red marks harmful changes, light grey marks points correct under both, and dark grey marks points wrong under both.}
\label{fig:qual-mechanism}
\end{figure}

\subsection{Failure Case: When the Tail Hurts}
Distribution retention is not uniformly beneficial. Figure~\ref{fig:qual-failure} shows \texttt{scene0490\_00}, an intentionally retained failure case. The frozen hard selector, matched top-1 fusion, and full-retention fusion obtain 18.37, 31.66, and 19.41 HM, respectively. Full retention is therefore 12.25 HM below matched top-1 and produces a net correction count of $-12{,}015$.

This case illustrates the complementary side of the mechanism. Secondary hypotheses can preserve useful ambiguity, but they can also spread error when sparse evidence is diffuse and conflicts with a strong dense prediction. This is consistent with the aggregate finding that moderate top-$k$ retention can outperform the unrestricted tail.

\begin{figure}[t]
\centering
\safeincludegraphics[width=\linewidth]{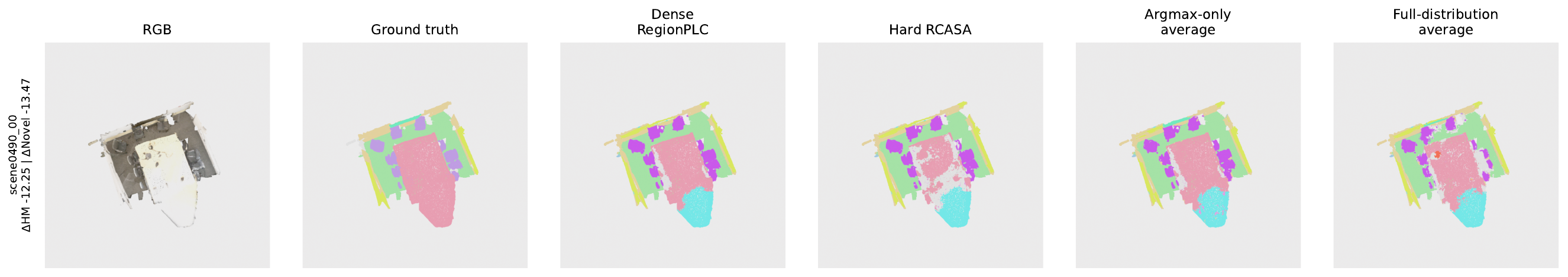}
\caption{Failure case on ScanNet200 (\texttt{scene0490\_00}). Full retention substantially underperforms matched top-1, showing that preserved secondary hypotheses are not automatically beneficial.}
\label{fig:qual-failure}
\end{figure}

\section{Discussion}

\subsection{The Novelty Is the Mechanistic Diagnosis, Not the Sum Rule}
The principal conceptual revision of this paper is explicit: score-level combination is classical~\cite{kittler1998combining}. The contribution here is to treat semantic retention itself as an experimental variable in heterogeneous frozen foundation-model composition. This distinction matters because a modern 3D pipeline introduces partial source coverage, geometric lifting, open-vocabulary semantics, base--novel evaluation, and independently pretrained representations that were not designed as members of a conventional ensemble.

The empirical question is therefore not ``can probabilities be averaged?'' but ``how much useful semantic information is destroyed by collapsing these heterogeneous sources before interaction?'' The matched top-$k$ ladder answers that question directly.

\subsection{Information Representation Matters More Than Selector Complexity}
The ScanNet200 weighting results provide a useful negative finding. Calibrated and globally weighted probability fusion remain close to uniform averaging, while contextual probability weighting falls near the hard-selector regime. Better confidence calibration is clearly achievable: held-out ECE falls from 29.20\% to 1.54\% for RegionPLC and from 43.29\% to 0.69\% for SAM3. Nevertheless, improved confidence reliability does not surpass simple full-distribution averaging. Calibration determines \emph{how much a source should be trusted}; it cannot reconstruct semantic alternatives that were already deleted by argmax.

The alternative-operator experiment strengthens this distinction. Distribution-level max, sum, and geometric pooling all exceed a common hard Top-1 control. Thus the central empirical result is not that one sophisticated arbitration function is best. It is that an arbitration rule should operate on an information-rich semantic representation. Once the distributions are retained, classical operators already provide strong performance; once they are collapsed, changing source weights largely degenerates into choosing which one-hot source wins a disagreement.

\subsection{A Small Candidate Set Is Often Enough}
ScanNet200 and ScanNet++ exhibit similar retention geometry: the largest gain occurs at the first relaxation beyond top-1, and later gains saturate. This suggests that useful uncertainty is concentrated in a relatively small set of plausible semantic alternatives. The result is compatible with memory-efficient implementations that retain short candidate lists rather than complete dense distributions at every point.

The GroundingDINO--SAM2.1 diagnostic strengthens this interpretation by reproducing the trend after replacing the sparse source stack. Although the diagnostic is too small to establish a new benchmark result, it reduces the likelihood that the primary observation is peculiar to one concept segmenter.

\subsection{Base--Novel Trade-off}
The generalized few-shot setting requires separate reporting of base and novel behavior. ScanNet200 top-$k$ retention strongly improves novel performance while preserving or improving base performance for several truncated conditions. ScanNet++ exhibits a clearer trade-off because its sparse source supports only novel classes. This structural asymmetry explains why novel mIoU increases while base mIoU falls under stronger sparse influence. HM is therefore used as the protocol-level balance metric, not as evidence that both groups always improve simultaneously.

\subsection{Relation to Trained GFS-PCS Systems}
Trained methods such as GFS-VL~\cite{an2025gfsvl} and HOP3D~\cite{zhao2026hop3d} remain stronger in absolute ScanNet200 HM because they adapt task-specific representations using labeled few-shot data. Our study asks a different question and deliberately avoids parameter updates so that the information-retention intervention is not confounded by representation learning. The contextual table should therefore be read as a scale reference, not as evidence of state-of-the-art absolute accuracy.

\subsection{Implications Beyond 3D}
The observed failure mode is potentially relevant whenever heterogeneous frozen systems exchange only their final discrete decisions. Modern examples include 2D--3D perception, multimodal routing, and LLM ensembling. DeePEn~\cite{huang2024deepen} independently demonstrates the value of distribution-level interaction for heterogeneous language models. We do not claim universal transfer of our numerical findings, but the common principle is clear: if cross-source complementarity lives in non-maximal alternatives, an early hard decision can erase it before composition has a chance to exploit it.

\section{Scope and Limitations}

This study does not claim that full probability averaging is a novel classifier-combination rule or that it is universally optimal. Classical score-level fusion predates this work by decades~\cite{kittler1998combining}. Our supported contribution is the controlled diagnosis of premature semantic collapse in the tested frozen 3D foundation-model regime.

Moderate top-$k$ retention slightly outperforms unrestricted tails on both principal datasets. The optimal amount of retained support can therefore depend on the source pair, class space, and local ambiguity. Adaptive $k$, entropy-aware retention, or agreement-conditioned gating are natural future directions, but are outside the present causal study.

The ScanNet++ sparse source supports the 18 novel categories but not the 12 base classes. This structural asymmetry contributes to the observed base--novel trade-off and limits any claim of symmetric benefit across class groups. Injecting base priors or learning class-conditional sparse calibration would change the sparse evidence itself and is therefore intentionally excluded from the matched intervention.

The primary sparse branch depends on posed imagery, camera calibration, deterministic frame sampling, and the upstream SAM3 concept-mask generator. The main projection uses in-frustum visibility without a depth-based occlusion filter. These choices affect evidence quality and coverage, but are held fixed within every retention comparison. A separate GroundingDINO--SAM2.1 diagnostic reduces, but does not eliminate, dependence on sparse-source design.

The GroundingDINO--SAM2.1 experiment contains only five scenes and is therefore a mechanism diagnostic, not a third benchmark. Its full-retention result also remains below the dense-only anchor, showing that retaining information cannot compensate for a weak sparse source in absolute terms.

Full distributions incur non-trivial storage cost. The 200-class float16 dense cache requires 400 bytes per point and occupies 18.46~GiB raw over the 49.54 million points in the complete ScanNet200 cache. Compression reduces the total NPZ footprint to 13.31~GiB, and streaming evaluation keeps measured peak process memory below 0.72~GiB, but on-device or memory-constrained deployment would favor short top-$k$ candidate lists. The empirical saturation of the retention curve therefore has practical importance beyond accuracy.

The alternative operator study is deliberately diagnostic. Geometric/log pooling produces the strongest held-out HM, and its result is numerically stable to $\epsilon$, but this operator was tested after the linear primary experiment was frozen. We therefore retain linear averaging as the headline method and avoid post-hoc redefinition of the main contribution. A future study could pre-register a broader fusion family and compare operators under fully matched top-$k$ interventions.

Finally, this paper studies inference-time composition rather than training. It does not argue that few-shot support, finetuning, prototype adaptation, or learned cross-modal alignment are unnecessary. It shows that when heterogeneous frozen semantic evidence is already available, the order in which that evidence is collapsed materially affects what any downstream combiner can use.

\section{Conclusion}

We investigated semantic collapse as an information bottleneck in frozen foundation-model composition for generalized few-shot 3D segmentation. The underlying distinction between score-level and decision-level classifier combination is classical; our contribution is a controlled modern study of its consequences under heterogeneous 3D/2D foundation-model composition.

Across ScanNet200 and ScanNet++, matched top-$k$ interventions show that top-1 collapse removes useful semantic alternatives before cross-source interaction. Full retention improves HM over top-1 by 6.40 points on ScanNet200 and 3.48 points on ScanNet++, with paired scene-bootstrap confidence intervals excluding zero. The largest gain on both datasets appears immediately after preserving more than one hypothesis, and moderate top-$k$ retention slightly exceeds the unrestricted tail. A controlled GroundingDINO--SAM2.1 replacement further reproduces the effect on the same spatial evidence, increasing HM from 14.77 at top-1 to 18.75 with full sparse retention.

The conclusion is robust to several implementation alternatives. Full-distribution performance changes by less than one HM point across sparse-source weights from 0.3 to 0.7; the advantage survives dense-, sparse-, and lowest-index tie policies; and full distributions remain superior to the hard control under linear, max, and geometric pooling. Source calibration can be repaired almost completely without eliminating the retention advantage, reinforcing that calibration error and semantic information loss are distinct problems.

The resulting message is deliberately narrower than ``probability averaging is better.'' The repeatable failure mode is \emph{premature semantic collapse}. In heterogeneous frozen-model systems, retaining a compact set of plausible alternatives until after sources interact can matter more than adding complexity to the downstream selector. This reframing turns a classical fusion principle into a concrete diagnostic for modern foundation-model composition.

\appendix

\section{Tie-Policy Robustness}
\label{app:ties}
Equal-weight one-hot fusion is intrinsically ambiguous whenever the two sources predict different classes. On the held-out ScanNet200 partition, 20,402,026 valid points have sparse evidence; dense and sparse Top-1 agree on 6,542,080 of them and disagree on 13,859,946. Thus 67.934\% of sparse-covered valid points form exact two-class ties under 0.5/0.5 one-hot fusion. The frozen implementation resolves ties by the lowest class index. We compare it with two semantically interpretable deterministic alternatives in Table~\ref{tab:tie-robustness}.

\begin{table}[h]
\centering
\caption{Top-1 tie-policy robustness on the same 156 held-out ScanNet200 scenes. Full distribution fusion is fixed at 34.87 HM.}
\label{tab:tie-robustness}
\small
\begin{tabular}{lcccr}
\toprule
Top-1 policy & Base & Novel & HM & Full $-$ Top-1 [95\% CI] \\
\midrule
Frozen lowest-index & 45.13 & 20.80 & 28.47 & +6.40 $[+5.25,+7.65]$ \\
Dense-favouring & 43.60 & 19.06 & 26.53 & +8.35 $[+7.15,+9.57]$ \\
Sparse-favouring & 23.98 & 17.18 & 20.02 & +14.86 $[+13.41,+16.03]$ \\
\bottomrule
\end{tabular}
\end{table}

The dense-favouring rule exactly reduces to the dense prediction: when sources agree, their class is unchanged; when they disagree, dense is chosen. Sparse-favouring analogously exposes the lower quality of the sparse Top-1 source. Crucially, the frozen lowest-index policy is the strongest of the three Top-1 variants, so the primary $+6.40$ HM retention gain is the smallest observed gap rather than an artifact of a weak tie rule.

\section{Fusion-Weight Sensitivity}
\label{app:weights}
We vary the sparse weight $w_s$ while setting $w_d=1-w_s$. For the Top-1 control, the exact $w_s=0.5$ tie is resolved in favor of dense to avoid class-index dependence. Full-distribution fusion is stable across the entire grid (Table~\ref{tab:weight-sensitivity}).

\begin{table}[h]
\centering
\caption{ScanNet200 sensitivity to the sparse-source weight.}
\label{tab:weight-sensitivity}
\small
\begin{tabular}{rcccrr}
\toprule
$w_s$ & Top-1 HM & Full HM & $\Delta$HM & Full Base & Full Novel \\
\midrule
0.3 & 26.53 & 34.33 & +7.80 & 46.07 & 27.36 \\
0.4 & 26.53 & 34.74 & +8.22 & 45.39 & 28.14 \\
0.5 & 26.53 & 34.87 & +8.35 & 44.43 & 28.70 \\
0.6 & 20.02 & 34.64 & +14.62 & 43.14 & 28.94 \\
0.7 & 20.02 & 34.17 & +14.15 & 41.60 & 28.99 \\
\bottomrule
\end{tabular}
\end{table}

With one-hot sources, $w_s<0.5$ makes dense win every disagreement and $w_s>0.5$ makes sparse win every disagreement, so continuous source reweighting degenerates into a hard switch. Retained distributions preserve graded within-class evidence and therefore support genuinely graded weighting.

\section{Held-Out Calibration Diagnostics}
\label{app:calibration}
Histogram calibrators with 15 equal-width bins are fitted only on the 156-scene fit partition and evaluated on the disjoint 156-scene ScanNet200 evaluation partition. Table~\ref{tab:calibration} reports empirical source correctness, mean confidence, expected calibration error (ECE), and binary correctness Brier score. The dense source is strongly under-confident before calibration, while the sparse source is strongly over-confident.

\begin{table}[h]
\centering
\caption{Held-out source calibration. Accuracy, confidence, and ECE are percentages. Brier is the binary correctness Brier score.}
\label{tab:calibration}
\small
\begin{tabular}{llrrrrr}
\toprule
Source & State & $N$ & Acc. & Mean conf. & ECE $\downarrow$ & Brier $\downarrow$ \\
\midrule
Dense RegionPLC & raw & 21,496,033 & 61.80 & 40.88 & 29.20 & 0.3117 \\
Dense RegionPLC & calibrated & 21,496,033 & 61.80 & 62.50 & \textbf{1.54} & \textbf{0.2146} \\
\midrule
Sparse SAM3 & raw & 20,402,026 & 38.02 & 81.31 & 43.29 & 0.4187 \\
Sparse SAM3 & calibrated & 20,402,026 & 38.02 & 37.74 & \textbf{0.69} & \textbf{0.2190} \\
\bottomrule
\end{tabular}
\end{table}

Calibration changes confidence reliability without changing source accuracy. The strong ECE reductions demonstrate that the fitted calibration machinery is effective, while the main ScanNet200 table shows that calibrated probability fusion (34.51 HM) still does not exceed uniform full-distribution averaging (34.87 HM). This separates two failure modes: confidence miscalibration and irreversible loss of semantic alternatives.

\section{Compute and Memory Accounting}
\label{app:efficiency}
Table~\ref{tab:compute} reports measured cache and CPU evaluation costs. The raw dense float16 representation is large, but chunked fusion is inexpensive compared with compressed-cache loading.

\begin{table}[h]
\centering
\caption{ScanNet200 storage and CPU evaluation accounting. GiB and MiB use powers of 1024.}
\label{tab:compute}
\small
\begin{tabular}{lr}
\toprule
Quantity & Measured value \\
\midrule
Full cache scenes / points & 312 / 49,540,568 \\
Compressed full-cache footprint & 13.31 GiB \\
Raw dense float16 probabilities & 18.46 GiB \\
Raw sparse COO evidence & 2.93 GiB \\
Sparse COO entries & 315,063,992 \\
Dense representation cost & 400 bytes / point \\
Sparse COO representation cost & 10 bytes / entry \\
Held-out scenes / points & 156 / 25,020,281 \\
Held-out compressed footprint & 6.69 GiB \\
Chunk size & 5,000 points \\
One $5000\times200$ float32 matrix & 3.81 MiB \\
Approx. three-matrix chunk scratch & 11.44 MiB \\
Measured process peak RSS & 719.52 MiB \\
Held-out cache loading & 130.58 s \\
Held-out probability fusion & 12.00 s \\
Mean fusion time / scene & 0.077 s \\
Fusion throughput & 2,084,610 points/s \\
\bottomrule
\end{tabular}
\end{table}

The largest ScanNet200 scene in the cache contains 438,565 points; materializing its complete $N\times200$ distribution as float32 would require 334.60~MiB, which motivates the streaming evaluator. The observed retention saturation around moderate $k$ also suggests a practical route to reduce storage further by keeping only short candidate lists.

\section{PoE Numerical-Floor Sensitivity}
\label{app:poe-eps}
Geometric pooling requires a numerical floor because the sparse distribution contains exact zeros. Table~\ref{tab:poe-eps} varies $\epsilon$ from $10^{-4}$ to $10^{-12}$ while keeping all source evidence and weights fixed. HM changes by only 0.16 points over the entire range and is effectively identical for $\epsilon\le10^{-8}$.

\begin{table}[h]
\centering
\caption{Geometric/log-pooling sensitivity to the numerical floor $\epsilon$.}
\label{tab:poe-eps}
\small
\begin{tabular}{rcccr}
\toprule
$\epsilon$ & Base & Novel & HM & $\Delta$HM vs. $10^{-8}$ \\
\midrule
$10^{-4}$ & 46.98 & 29.18 & 36.00 & $-0.15$ \\
$10^{-6}$ & 47.03 & 29.31 & 36.12 & $-0.04$ \\
$10^{-8}$ & 47.07 & 29.35 & 36.16 & 0.00 \\
$10^{-10}$ & 47.07 & 29.35 & 36.16 & $-0.00$ \\
$10^{-12}$ & 47.07 & 29.35 & 36.16 & $-0.00$ \\
\bottomrule
\end{tabular}
\end{table}

\bibliography{iclr2027_conference}
\bibliographystyle{iclr2027_conference}

\end{document}